\newcommand{\bx}{\boldsymbol{x}}
\newcommand{\bu}{\boldsymbol{u}}
\newcommand{\bg}{\boldsymbol{g}}
\newcommand{\bw}{\boldsymbol{w}}
\newcommand{\bv}{\boldsymbol{v}}
\DeclareMathOperator{\Wealth}{Wealth}
\DeclareMathOperator{\Reward}{Reward}
\newcommand{\field}[1]{\mathbb{#1}}
\newcommand{\R}{\field{R}}
\newcommand{\E}{\field{E}}
\newcommand{\norm}[1]{\left\|{#1}\right\|}
\newcommand{\btheta}{\boldsymbol{\theta}}
\newcommand{\sgn}{\mbox{\sc sgn}}
\newtheorem{lemma}{Lemma}
\newtheorem{theorem}{Theorem}
\title{Training Deep Networks without Learning Rates\\Through Coin Betting}
\author{
  Francesco Orabona\thanks{The authors contributed equally.} \\
  Department of Computer Science\\
  Stony Brook University, NY, USA\\
  \texttt{francesco@orabona.com}
  \and
  Tatiana Tommasi\footnotemark[1] \\
  Department of Computer, Control, and Management Engineering Antonio Ruberti\\
  Sapienza University of Rome, Italy \\
  \texttt{tommasi@dis.uniroma1.it}
}
\begin{document}

\maketitle

\begin{abstract}
Deep learning methods achieve state-of-the-art performance in many application scenarios. 
Yet, these methods require a significant amount of hyperparameters tuning in order to achieve 
the best results. In particular, tuning the learning rates in the stochastic optimization 
process is still one of the main bottlenecks.
In this paper, we propose a new stochastic gradient descent procedure for deep networks that 
does not require any learning rate setting. Contrary to previous methods, we do not adapt the 
learning rates nor we make use of the assumed curvature of the objective function. Instead, 
we reduce the optimization process to a game of betting on a coin and propose a learning-rate-free
optimal algorithm for this scenario.
Theoretical convergence is proven for convex and quasi-convex functions and empirical evidence shows 
the advantage of our algorithm over popular stochastic gradient algorithms.

\end{abstract}

\section{Introduction}

In the last years deep learning has demonstrated a great success in a large number of fields
and has attracted the attention of various research communities with the consequent development
of multiple coding frameworks (e.g., Caffe~\citep{jia2014caffe}, TensorFlow~\citep{tensorflow2015-whitepaper}), 
the diffusion of blogs, online tutorials, books, and dedicated courses. Besides reaching out scientists
with different backgrounds, the need of all these supportive tools originates also from the nature 
of deep learning:~it is a methodology that involves many structural details as well as several hyperparameters
whose importance has been growing with the recent trend of designing deeper and multi-branches networks. 
Some of the hyperparameters define the model itself (e.g., number of hidden layers, regularization coefficients, kernel size for convolutional layers), while others are related 
to the model training procedure.
In both cases, hyperparameter tuning is a critical step to realize deep learning 
full potential and most of the knowledge in this area comes from living practice, years 
of experimentation, and, to some extent, mathematical justification~\citep{BookBengio2012}. 

With respect to the optimization process, stochastic gradient descent (SGD) has proved itself to be 
a key component of the deep learning success, but its effectiveness strictly depends on the choice of
the initial learning rate and learning rate schedule. This has primed a line of research on algorithms
to reduce the hyperparameter dependence in SGD---see Section~\ref{sec:rel} for an overview on the related literature. However,
all previous algorithms resort on adapting the learning rates, rather than removing them, or rely on assumptions on the shape of the objective function.

In this paper we aim at removing at least one of the hyperparameter of deep learning models. We leverage over recent advancements in the stochastic optimization literature to design 
a backpropagation procedure that does not have a learning rate at all, yet it is as simple as the vanilla SGD.
Specifically, we reduce the SGD problem to the game of betting on a coin (Section~\ref{sec:coin}). 
In Section~\ref{sec:cocob}, we present a novel strategy to bet on a coin that extends previous ones in a data-dependent way, 
proving optimal convergence rate in the convex and quasi-convex setting (defined in Section~\ref{sec:def}). 
Furthermore, we propose a variant of our algorithm for deep networks (Section~\ref{sec:backprop}). 
Finally, we show how our algorithm outperforms popular optimization methods in the deep learning 
literature on a variety of architectures and benchmarks (Section~\ref{sec:exp}).

\section{Related Work}
\label{sec:rel}

Stochastic gradient descent offers several challenges in terms of convergence speed.
Hence, the topic of learning rate setting has been largely investigated.

Some of the existing solutions are based on the use of carefully tuned momentum terms~\citep{LeCunBOM98,SutskeverMDH13,KingmaB15}.
It has been demonstrated that these terms can speed-up the convergence for convex smooth functions~\citep{Nesterov83}.
Other strategies propose scale-invariant learning rate updates to deal with gradients whose magnitude
changes in each layer of the network~\citep{DuchiHS11,TielemanH12,Zeiler12,KingmaB15}. Indeed, scale-invariance is a
well-known important feature that has also received attention outside of the deep learning community~\citep{RossML13,OrabonaCCB15,OrabonaP15}.
Yet, both these approaches do not avoid the use of a learning rate.

A large family of algorithms exploit a second order approximation of the cost function to better
capture its local geometry and avoid the manual choice of a learning rate. The step size is automatically
adapted to the cost function with larger/shorter steps in case of shallow/steep curvature.
Quasi-Newton methods~\citep{WrightN99} as well as the natural gradient method~\citep{Amari98} belong
to this family. Although effective in general, they have a spatial and computational complexity that is square
in the number of parameters with respect to the first order methods, which makes the application
of these approaches unfeasible in modern deep learning architectures. Hence, typically the required matrices are approximated with diagonal ones~\citep{LeCunBOM98,SchaulZLC13}. Nevertheless, even assuming
the use of the full information, it is currently unclear if the objective functions in deep learning have
enough curvature to guarantee any gain.

There exists a line of work on unconstrained stochastic gradient descent without learning rates~\citep{Streeter-McMahan-2012,Orabona-2013,McMahan-Orabona-2014,Orabona-2014,CutkoskyB16,CutkoskyB17}.
The latest advancement in this direction is the strategy of reducing stochastic subgradient descent to coin-betting, proposed by~\citet{OrabonaP16b}.
However, their proposed betting strategy is worst-case with respect to the gradients received and cannot
take advantage, for example, of sparse gradients.

\section{Definitions}
\label{sec:def}

We now introduce the basic notions of convex analysis that are used in the paper---see, e.g., \citet{Bauschke-Combettes-2011}.
We denote by $\norm{\cdot}_1$ the $1$-norm in $\R^d$. Let $f: \R^d \to \R
\cup \{\pm\infty\}$, the \emph{Fenchel conjugate} of $f$ is $f^*:\R^d \to \R \cup \{\pm
\infty\}$ with $f^*(\btheta) = \sup_{\bx \in
\R^d} \ \btheta^\top \bx - f(\bx)$.

A vector $\bx$ is a \emph{subgradient} of a convex function $f$ at $\bv$ if $f(\bv) - f (\bu) \leq  (\bv - \bu)^\top \bx$ for any $\bu$ in the domain of $f$. The \emph{differential set} of $f$ at $\bv$, denoted by $\partial f(\bv)$, is the set of all the subgradients of $f$ at $\bv$. If $f$ is also differentiable at $\bv$, then $\partial f(\bv)$ contains a single vector, denoted by $\nabla f(\bv)$, which is the \emph{gradient} of $f$ at $\bv$.

We go beyond convexity using the definition of weak quasi-convexity in \citet{HardtMR16}. 
This definition is relevant for us because \citet{HardtMR16} proved that $\tau$-weakly-quasi-convex objective functions arise in the training of linear recurrent networks.
A function $f:\R^d\to \R$ is \emph{$\tau$-weakly-quasi-convex} over a domain $B \subseteq \R^d$ with respect to the global minimum $\bv^*$ if there is a positive constant $\tau > 0$
such that for all $\bv \in B$, $f(\bv)-f(\bv^*) \leq \tau (\bv -\bv^*)^\top \nabla f(\bv)$. From the definition, it follows that differentiable convex function are also $1$-weakly-quasi-convex.

\paragraph{Betting on a coin.}
We will reduce the stochastic subgradient descent procedure to betting on a number of coins. Hence, here we introduce the betting scenario and its notation. We consider a gambler making repeated bets on the
outcomes of adversarial coin flips. The gambler starts with initial
money $\epsilon > 0$. In each round $t$, he bets on the outcome of a coin
flip $g_t \in \{-1,1\}$, where $+1$ denotes heads and $-1$ denotes tails.  We
do not make any assumption on how $g_t$ is generated.

The gambler can bet any amount on either heads or tails. However, he is not
allowed to borrow any additional money. If he loses, he loses the betted
amount; if he wins, he gets the betted amount back and, in addition to that, he
gets the same amount as a reward.  We encode the gambler's bet in round $t$ by
a single number $w_t$. The sign of $w_t$ encodes whether he is betting on heads
or tails. The absolute value encodes the betted amount.  We define $\Wealth_t$
as the gambler's wealth at the end of round $t$ and $\Reward_t$ as the
gambler's net reward (the difference of wealth and the initial money), that is
\begin{align}
\label{equation:def_wealth_reward}
\Wealth_t = \epsilon + \sum_{i=1}^t w_i g_i &
& \text{and} &
& \Reward_t = \Wealth_t - \ \epsilon = \sum_{i=1}^t w_i g_i~.
\end{align}
In the following, we will also refer to a bet with $\beta_t$, where $\beta_t$
is such that
\begin{equation}
\label{equation:def_wt}
w_t = \beta_t \Wealth_{t-1}~.
\end{equation}
The absolute value of $\beta_t$ is the \emph{fraction} of the current wealth to
bet and its sign encodes whether he is betting on heads or tails. The
constraint that the gambler cannot borrow money implies that $\beta_t \in
[-1,1]$.
We also slighlty generalize the problem by allowing the outcome of the coin flip
$g_t$ to be any real number in $[-1,1]$, that is a \emph{continuous coin}; wealth and reward in~\eqref{equation:def_wealth_reward} remain the same.

\section{Subgradient Descent through Coin Betting}
\label{sec:coin}

In this section, following \citet{OrabonaP16b}, we briefly explain how to reduce subgradient 
descent to the gambling scenario of betting on a coin.

Consider as an example the function $F(x):=|x-10|$ and the optimization problem  $\min_{x} \ F(x)$.
This function does not have any curvature, in fact it is not even differentiable, thus
no second order optimization algorithm could reliably be used on it.
We set the outcome of the coin flip $g_t$
to be equal to the negative subgradient of $F$ in $w_t$, that is $g_t \in \partial [-F(w_t)]$, where we remind that $w_t$ is the amount of money we bet. 
Given our choice of $F(x)$, its negative subgradients are in $\{-1,1\}$.
In the first iteration we do not bet, hence $w_1=0$ and our initial money is \$1. 
Let's also assume that there exists a function $H(\cdot)$ such that our betting strategy will 
guarantee that the wealth after $T$ rounds will be at least $H(\sum_{t=1}^T g_t)$ for any arbitrary sequence $g_1, \cdots, g_T$.

We claim that the average of the bets, $\tfrac{1}{T}\sum_{t=1}^T w_t$, converges to the solution of our 
optimization problem and the rate depends on how good our betting strategy is. Let's see how.

Denoting by $x^*$ the minimizer of $F(x)$, we have that the following holds 
\begin{align*}
F\left(\frac{1}{T} \sum_{t=1}^T w_t\right) - F(x^*) 
&\leq \frac{1}{T} \sum_{t=1}^T F(w_t) - F(x^*)
\leq \frac{1}{T}\sum_{t=1}^T g_t x^* -\frac{1}{T}\sum_{t=1}^T g_t w_t \\
&\leq \tfrac{1}{T}+\tfrac{1}{T}\left(\sum_{t=1}^T g_t x^*  - H\left(\sum_{t=1}^T g_t\right)\right)
\leq \tfrac{1}{T}+\tfrac{1}{T}\max_v \ v x^*  - H(v) \\
&= \tfrac{H^*(x^*)+1}{T},
\end{align*}
where in the first inequality we used Jensen's inequality, in the second the definition of subgradients, 
in the third our assumption on $H$, and in the last equality the definition of Fenchel conjugate of $H$.

In words, we used a gambling algorithm to find the minimizer of a non-smooth objective function by 
accessing its subgradients. All we need is a good gambling strategy. Note that this is just a very 
simple one-dimensional example, but the outlined approach works in any dimension and for any convex 
objective function, even if we just have access to stochastic subgradients~\citep{OrabonaP16b}. 
In particular, if the gradients are bounded in a range, the same reduction works using a continuous coin.

\citet{OrabonaP16b} showed that the simple betting strategy of $\beta_t=\tfrac{\sum_{i=1}^{t-1} g_i}{t}$ 
gives optimal growth rate of the wealth and optimal worst-case convergence rates. However, it is not data-dependent 
so it does not adapt to the sparsity of the gradients.
In the next section, we will show an actual betting strategy that guarantees optimal convergence rate and adaptivity to the gradients.

\begin{algorithm}[t]
\caption{COntinuous COin Betting - COCOB}
\label{algorithm:COCOB}
\begin{algorithmic}[1]
{
\STATE{Input: $L_i>0, i=1,\cdots,d$; $\bw_1\in \R^d$ (initial parameters); $T$ (maximum number of iterations); $F$ (function to minimize) }
\STATE{Initialize: $G_{0,i}\leftarrow L_i$, $\Reward_{0,i}\leftarrow0$, $\theta_{0,i}\leftarrow0$, $i=1,\cdots, d$}
\FOR{$t=1,2,\dots, T$}
\STATE{Get a (negative) stochastic subgradient $\bg_{t}$ such that $\E[\bg_t] \in \partial[- F(\bw_t)]$}
\FOR{$i=1,2,\dots, d$}
\STATE{Update the sum of the absolute values of the subgradients: $G_{t,i} \leftarrow G_{t-1,i} + |g_{t,i}|$}
\STATE{Update the reward: $\Reward_{t,i} \leftarrow \Reward_{t-1,i} + (w_{t,i}-w_{1,i}) g_{t,i}$}
\STATE{Update the sum of the gradients: $\theta_{t,i} \leftarrow \theta_{t-1,i} + g_{t,i}$}
\STATE{Calculate the fraction to bet: $\beta_{t,i}=\frac{1}{L_i}\left(2\sigma\left(\tfrac{2\theta_{t,i}}{G_{t,i}+ L_{i}}\right)-1\right)$, where $\sigma(x)=\tfrac{1}{1+\exp(-x)}$} \label{line:bet}
\STATE{Calculate the parameters: $w_{t+1,i} \leftarrow w_{1,i}+ \beta_{t,i} \left(L_i + \Reward_{t,i}\right)$} \label{line:w}
\ENDFOR
\ENDFOR
\STATE{Return $\bar{\bw}_T=\frac{1}{T} \sum_{t=1}^T \bw_{t}$ or $\bw_I$ where $I$ is chosen uniformly between $1$ and $T$}
}
\end{algorithmic}
\end{algorithm}

\section{The COCOB Algorithm}
\label{sec:cocob}
We now introduce our novel algorithm for stochastic subgradient descent, COntinuous COin Betting (COCOB), 
summarized in Algorithm~\ref{algorithm:COCOB}.
COCOB generalizes the reasoning outlined in the previous section to the optimization of a 
function $F: \R^d \rightarrow \R$ with bounded subgradients, reducing the optimization to betting on $d$ coins.

Similarly to the construction in the previous section, the outcomes of the coins are linked to the stochastic gradients. In particular, each $g_{t,i}\in [-L_{i}, L_{i}]$ for $i=1,\cdots,d$ is equal to the coordinate $i$ of the negative stochastic gradient $\bg_t$ of $F$ in $\bw_t$.
With the notation of the algorithm, COCOB is based on the strategy to bet a signed fraction of the 
current wealth equal to $\tfrac{1}{L_i}\left(2\sigma\left(\tfrac{2\theta_{t,i}}{G_{t,i}+ L_{i}}\right)-1\right)$, 
where $\sigma(x)=\tfrac{1}{1+\exp(-x)}$ (lines \ref{line:bet} and \ref{line:w}).
Intuitively, if $\tfrac{\theta_{t,i}}{G_{t,i}+ L_{i}}$ is big in absolute value, it means that we received 
a sequence of equal outcomes, i.e., gradients, hence we should increase our bets, i.e., the absolute value of $w_{t,i}$. 
Note that this strategy assures that $|w_{t,i} g_{t,i}| < \Wealth_{t-1,i}$, so the wealth of the 
gambler is always positive. Also, it is easy to verify that the algorithm is scale-free because 
multiplying all the subgradients and $L_i$ by any positive constant it would result in the same sequence of iterates $w_{t,i}$.

Note that the update in line~\ref{line:w} is carefully defined: The algorithm does not use the previous 
$w_{t,i}$ in the update. Indeed, this algorithm belongs to the family of the Dual Averaging algorithms, 
where the iterate is a function of the average of the past gradients~\citep{Nesterov09}.

Denoting by $\bw^*$ a minimizer of $F$, COCOB satisfies the following convergence guarantee.
\begin{theorem}
\label{theo:main}
Let $F:\R^d \rightarrow \R$ be a $\tau$-weakly-quasi-convex function and assume that $\bg_t$ satisfy $|g_{t,i}|\leq L_i$.
Then, running COCOB for $T$ iterations guarantees, with the notation in Algorithm~\ref{algorithm:COCOB},
\[
\E[F\left(\bw_I\right)] - F(\bw^*) \leq \sum_{i=1}^d \tfrac{L_i+|w^*_i-w_{1,i}| \sqrt{ \E\left[L_i (G_{T,i} +L_i)\ln\left(1+ \tfrac{(G_{T,i}+L_i)^2 (w_i^*-w_{1,i})^2}{L_i^2}\right)\right]}}{\tau T},
\]
where the expectation is with respect to the noise in the subgradients and the choice of $I$.
Moreover, if $F$ is convex, the same guarantee with $\tau=1$ also holds for $\bw_T$.
\end{theorem}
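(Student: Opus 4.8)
The plan is to lift the one-dimensional argument of Section~\ref{sec:coin} to $d$ coordinates and to the stochastic, weakly-quasi-convex regime, with essentially all the work going into a data-dependent lower bound on the wealth accumulated by COCOB's betting rule. First I would set up the reduction. Let $\mathcal{F}_{t-1}$ be the $\sigma$-field generated by $\bg_1,\dots,\bg_{t-1}$, so $\bw_t$ is $\mathcal{F}_{t-1}$-measurable and $\E[\bg_t\mid\mathcal{F}_{t-1}]\in\partial[-F(\bw_t)]$. Using the (sub)gradient inequality for $F$ at $\bw_t$ and then the definition of $\tau$-weak-quasi-convexity, $F(\bw_t)-F(\bw^*)$ is controlled by $(\bw_t-\bw^*)^\top(-\E[\bg_t\mid\mathcal{F}_{t-1}])$ up to the factor $\tau$ in the statement; summing over $t$, taking expectations (tower rule), and averaging over the uniform $I$ gives $\tau T(\E[F(\bw_I)]-F(\bw^*))\le\sum_{i=1}^d \E\!\big[\sum_{t=1}^T (w_{t,i}-w^*_i)(-g_{t,i})\big]$, and by Jensen the analogous bound for $\bar{\bw}_T$ when $F$ is convex with $\tau=1$. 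Then, exactly as in Section~\ref{sec:coin} but coordinatewise, I would substitute $w_{t,i}=w_{1,i}+\beta_{t-1,i}\Wealth_{t-1,i}$ and $\Wealth_{t,i}=L_i+\Reward_{t,i}$ and telescope to obtain the identity $\sum_{t=1}^T (w_{t,i}-w^*_i)(-g_{t,i}) = L_i + (w^*_i-w_{1,i})\,\theta_{T,i} - \Wealth_{T,i}$. The theorem then reduces to lower bounding $\Wealth_{T,i}$ as a function of $\theta_{T,i}$ and taking a Fenchel conjugate.

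The crux, which I expect to be the main obstacle, is the wealth bound: COCOB's bet $\beta_{t,i}=\tfrac1{L_i}\big(2\sigma(\tfrac{2\theta_{t,i}}{G_{t,i}+L_i})-1\big)=\tfrac1{L_i}\tanh\!\big(\tfrac{\theta_{t,i}}{G_{t,i}+L_i}\big)$ must be shown to guarantee $\Wealth_{T,i}\ge H_i(\theta_{T,i})$ for a convex $H_i$ depending on $G_{T,i}$ and $L_i$. I would argue by induction on $t$ against a potential $\Phi_t(\theta_{t,i},G_{t,i})$ with $\Phi_0(0,L_i)\le L_i=\Wealth_{0,i}$: since $\Wealth_{t,i}=\Wealth_{t-1,i}(1+\beta_{t-1,i}g_{t,i})$ — which incidentally keeps the wealth positive because $|\beta_{t-1,i}g_{t,i}|<1$ — it is enough to verify the one-step inequality $\Phi_{t-1}(\theta,G)\,(1+\beta_{t-1,i}g)\ge \Phi_t(\theta+g,\,G+|g|)$ for every admissible coin value $g\in[-L_i,L_i]$. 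The potential should be picked so that COCOB's sigmoid is exactly the (continuous-coin) wealth-maximizing fraction for it; concretely $\Phi_t$ is $\big(\cosh\tfrac{\theta}{G+L_i}\big)^{(G+L_i)/L_i}$ times a normalization that shrinks across rounds at a rate controlled by $G_{t,i}$, so that near the origin $\Phi_T$ behaves like a Gaussian $\exp\!\big(\tfrac{\theta^2}{2L_i(G_{T,i}+L_i)}\big)$ with coefficient of order $L_i\sqrt{L_i/(G_{T,i}+L_i)}$. With that choice the identity $\cosh x\,(1+\tanh x)=e^x$ makes the $g=\pm L_i$ cases collapse, and since the logarithm of the one-step inequality is concave in $g$ on each side of the origin it suffices to check those extreme values, which is an elementary but delicate calculus estimate. (Alternatively one could recognize the update as FTRL on the log-wealth losses with a scaled binary-entropy regularizer and an AdaGrad-style weight $G_{t,i}+L_i$, and bound the wealth through the corresponding adaptive regret bound; the direct potential argument seems cleaner.)

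To finish, combining the two steps coordinatewise gives, pathwise, $\sum_{t=1}^T (w_{t,i}-w^*_i)(-g_{t,i})\le L_i + (w^*_i-w_{1,i})\theta_{T,i}-H_i(\theta_{T,i})\le L_i + H_i^*(w^*_i-w_{1,i})$, where $H_i^*$ is the Fenchel conjugate of $H_i$ taken in $\theta$ with $G_{T,i}$ frozen. It remains to establish the conjugate bound $H_i^*(v)\le |v|\sqrt{L_i(G_{T,i}+L_i)\ln\!\big(1+\tfrac{(G_{T,i}+L_i)^2 v^2}{L_i^2}\big)}$, which is a direct computation on the near-Gaussian potential above, of the same exponential-versus-$\sqrt{\log}$ flavor as the conjugate estimates used by \citet{OrabonaP16b}. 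Taking expectations, moving $\E$ inside the square root by Jensen (concavity of $\sqrt{\cdot}$), summing over $i$, and dividing by $\tau T$ yields the stated bound for $\bw_I$; the convex case for $\bar{\bw}_T$ follows from the Jensen step recorded above with $\tau=1$. Once the wealth lemma is available, the rest is routine bookkeeping.
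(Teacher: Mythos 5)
Your proposal follows essentially the same route as the paper's proof: reduce to a per-coordinate linear regret via weak quasi-convexity and the tower rule, rewrite that regret as $L_i + (w^*_i - w_{1,i})\theta_{T,i} - \Wealth_{T,i}$, lower-bound the wealth by induction on a potential whose one-step inequality is checked via piecewise convexity in $g$ on $[-L_i,0]$ and $[0,L_i]$ (so only the endpoints and $g=0$ matter), and finish with a Fenchel-conjugate estimate of the $\beta e^{x^2/(2\alpha)}$ type followed by Jensen. The only substantive divergence is the potential itself: the paper uses $H_{t,i}(x)=L_i\exp\bigl(\tfrac{x^2}{2L_i(G_{t,i}+L_i)}-\sum_{j\le t}\tfrac{|g_{j,i}|}{2(L_i+G_{j-1,i})}\bigr)$, whose correction term telescopes to $\tfrac12\ln\tfrac{G_{T,i}}{L_i}$ by concavity of the logarithm and which plugs directly into the conjugate lemma, whereas your cosh-power potential would still have to be lower-bounded by such a Gaussian on the relevant range $|\theta|\le G_{T,i}+L_i$ (note $\ln\cosh x\le x^2/2$, so this step costs an extra constant), making the paper's choice the more economical instantiation of the same idea.
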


The proof, in the Appendix, shows through induction that betting a fraction of money 
equal to $\beta_{t,i}$ in line~\ref{line:bet} on the outcomes $g_{i,t}$, with an initial 
money of $L_i$, guarantees that the wealth after $T$ rounds is at least 
$L_i \exp\left(\tfrac{\theta_{T,i}^2}{2 L_i (G_{T,i}+L_i)}-\tfrac{1}{2}\ln\tfrac{G_{T,i}}{L_i}\right)$. 
Then, as sketched in Section~\ref{sec:coin}, it is enough to calculate the Fenchel conjugate of the wealth and use the 
standard construction for the per-coordinate updates~\citep{StreeterM10}. We note in passing that the 
proof technique is also novel because the one introduced in \citet{OrabonaP16b} does not allow data-dependent bounds.


When $|g_{t,i}|=1$, we have $\beta_{t,i} \approx \tfrac{\sum_{i=1}^{t-1} g_i}{t}$ that recovers the betting strategy in \citet{OrabonaP16b}. In other words, we substitute the time variable with the 
data-dependent quantity $G_{t,i}$. In fact, our bound depends on the terms $G_{T,i}$ while the similar one in \citet{OrabonaP16b} simply depends on 
$L_i T$. Hence, as in AdaGrad~\citep{DuchiHS11}, COCOB's bound is tighter because it takes advantage of sparse gradients.

COCOB converges at a rate of $\tilde{O}(\tfrac{\|\bw^*\|_1}{\sqrt{T}})$ \emph{without any learning rate to tune}. 
This has to be compared to the bound of AdaGrad that is\footnote{The AdaGrad variant used in deep learning does not have 
a convergence guarantee, because no projections are used. Hence, we report the oracle bound in the case that projections 
are used inside the hypercube with dimensions $|w^*_i|$.} $O(\tfrac{1}{\sqrt{T}} \sum_{i=1}^d (\tfrac{(\bw^*)^2}{\eta_i}+\eta_i))$, 
where $\eta_i$ are the initial learning rates for each coordinate. Usually all the $\eta_i$ are set to the same value, 
but from the bound we see that the optimal setting would require a different value for each of them. This effectively means that the optimal $\eta_i$ for AdaGrad are problem-dependent and typically unknown.
Using the optimal $\eta_i$ would give us a convergence rate of $O(\tfrac{\|\bw^*\|_1}{\sqrt{T}})$, 
that is exactly equal to our bound up to polylogarithmic terms.
Indeed, the logarithmic term in the square root of our bound is the price to pay to be adaptive to any $\bw^*$ and 
not tuning hyperparameters. This logarithmic term is unavoidable for any algorithm that wants to be adaptive 
to $\bw^*$, hence our bound is optimal~\citep{Streeter-McMahan-2012, Orabona-2013}.

\begin{figure}[t]
  \centering
  \begin{tabular}{ccc}
    \resizebox{4.2cm}{!}{
    \begin{tikzpicture}
    \begin{axis}[axis line style = thick,
                domain = 0:15,
                samples = 200,
                axis x line = middle,
                axis y line = left,
                every axis x label/.style={at={(current axis.right of origin)},anchor=west},
                every axis y label/.style={at={(current axis.north west)},above=0mm},
                xlabel = {$x$},
                ylabel = {$y$},
                ticks = none,
                grid = major
                ]
                \addplot[blue,semithick] {abs(x-10)} [yshift=3pt] node[pos=.95,left] {};
                \draw[-{Stealth[scale=0.8,angle'=30]},semithick](axis cs:0,10) to [out=0,in=100] (axis cs:0.5,9.5);
                \draw[-{Stealth[scale=0.8,angle'=30]},semithick](axis cs:0.5,9.5) to [out=0,in=100] (axis cs:1,9);
                \draw[-{Stealth[scale=0.8,angle'=30]},semithick](axis cs:1,9) to [out=0,in=80] (axis cs:1.875,8.125);
                \draw[-{Stealth[scale=0.8,angle'=30]},semithick](axis cs:1.875,8.125) to [out=0,in=80] (axis cs:3.5,6.5);
                \draw[-{Stealth[scale=0.8,angle'=30]},semithick](axis cs:3.5,6.5) to [out=355,in=85] (axis cs:6.5625,3.4375);
                \draw[-{Stealth[scale=0.8,angle'=30]},semithick](axis cs:6.5625,3.4375) to [out=10,in=130] (axis cs:12.3750,2.3750);
                \draw[-{Stealth[scale=0.8,angle'=30]},semithick](axis cs:12.3750,2.3750) to [out=90,in=0] (axis cs:1.2891,8.7109);
            \end{axis}
    \end{tikzpicture}} &
    \resizebox{4.2cm}{!}{
    \begin{tikzpicture}
    \begin{axis}[axis line style = thick,
                domain = 0:15,
                samples = 200,
                axis x line = middle,
                axis y line = left,
                every axis x label/.style={at={(current axis.right of origin)},anchor=west},
                every axis y label/.style={at={(current axis.north west)},above=0mm},
                xlabel = {$x$},
                ylabel = {$y$},
                ticks = none,
                grid = major
                ]
                \addplot[blue,semithick] {abs(x-10)} [yshift=3pt] node[pos=.95,left] {};
                \draw[-{Stealth[scale=0.8,angle'=30]},semithick](axis cs:0,10) to [out=0,in=90] (axis cs:0.5,9.5);
                \draw[-{Stealth[scale=0.8,angle'=30]},semithick](axis cs:0.5,9.5) to [out=0,in=90] (axis cs:1,9);
                \draw[-{Stealth[scale=0.8,angle'=30]},semithick](axis cs:1,9) to [out=0,in=90] (axis cs:1.5,8.5);
                \draw[-{Stealth[scale=0.8,angle'=30]},semithick](axis cs:1.5,8.5) to [out=0,in=90] (axis cs:2,8);
                \draw[-{Stealth[scale=0.8,angle'=30]},semithick](axis cs:2,8) to [out=0,in=90] (axis cs:2.5,7.5);
                \draw[-{Stealth[scale=0.8,angle'=30]},semithick](axis cs:2.5,7.5) to [out=0,in=90] (axis cs:3,7);
                \draw[-{Stealth[scale=0.8,angle'=30]},semithick](axis cs:3,7) to [out=0,in=90] (axis cs:3.5,6.5);
                
                \draw[-{Stealth[scale=0.8,angle'=30]},semithick, red](axis cs:0,10) to [out=5,in=75] (axis cs:3,7);
                \draw[-{Stealth[scale=0.8,angle'=30]},semithick, red](axis cs:3,7) to [out=5,in=75] (axis cs:6,4);
                \draw[-{Stealth[scale=0.8,angle'=30]},semithick, red](axis cs:6,4) to [out=5,in=75] (axis cs:9,1);
                \draw[-{Stealth[scale=0.8,angle'=30]},semithick, red](axis cs:9,1) to [out=40,in=120] (axis cs:12,2);
                \draw[-{Stealth[scale=0.8,angle'=30]},semithick, red](axis cs:12,2) to [out=270,in=350] (axis cs:9,1);

                \draw[-{Stealth[scale=0.8,angle'=30]},semithick, green](axis cs:0,10) to [out=0,in=80] (axis cs:2,8);
                \draw[-{Stealth[scale=0.8,angle'=30]},semithick, green](axis cs:2,8) to [out=0,in=80] (axis cs:3.4142,6.58);
                \draw[-{Stealth[scale=0.8,angle'=30]},semithick, green](axis cs:3.4142,6.58) to [out=0,in=80] (axis cs:4.5689,5.4311);
                \draw[-{Stealth[scale=0.8,angle'=30]},semithick, green](axis cs:4.5689,5.4311) to [out=0,in=90] (axis cs:5.5689,4.4311);
                \draw[-{Stealth[scale=0.8,angle'=30]},semithick, green](axis cs:5.5689,4.4311) to [out=0,in=90] (axis cs:6.4633,3.5367);
                \draw[-{Stealth[scale=0.8,angle'=30]},semithick, green](axis cs:6.4633,3.5367) to [out=0,in=90] (axis cs:7.2798,2.7202);
                \draw[-{Stealth[scale=0.8,angle'=30]},semithick, green](axis cs:7.2798,2.7202) to [out=0,in=90] (axis cs:8.0358,1.9642);
            \end{axis}
    \end{tikzpicture}} &
    \includegraphics[width=4.2cm]{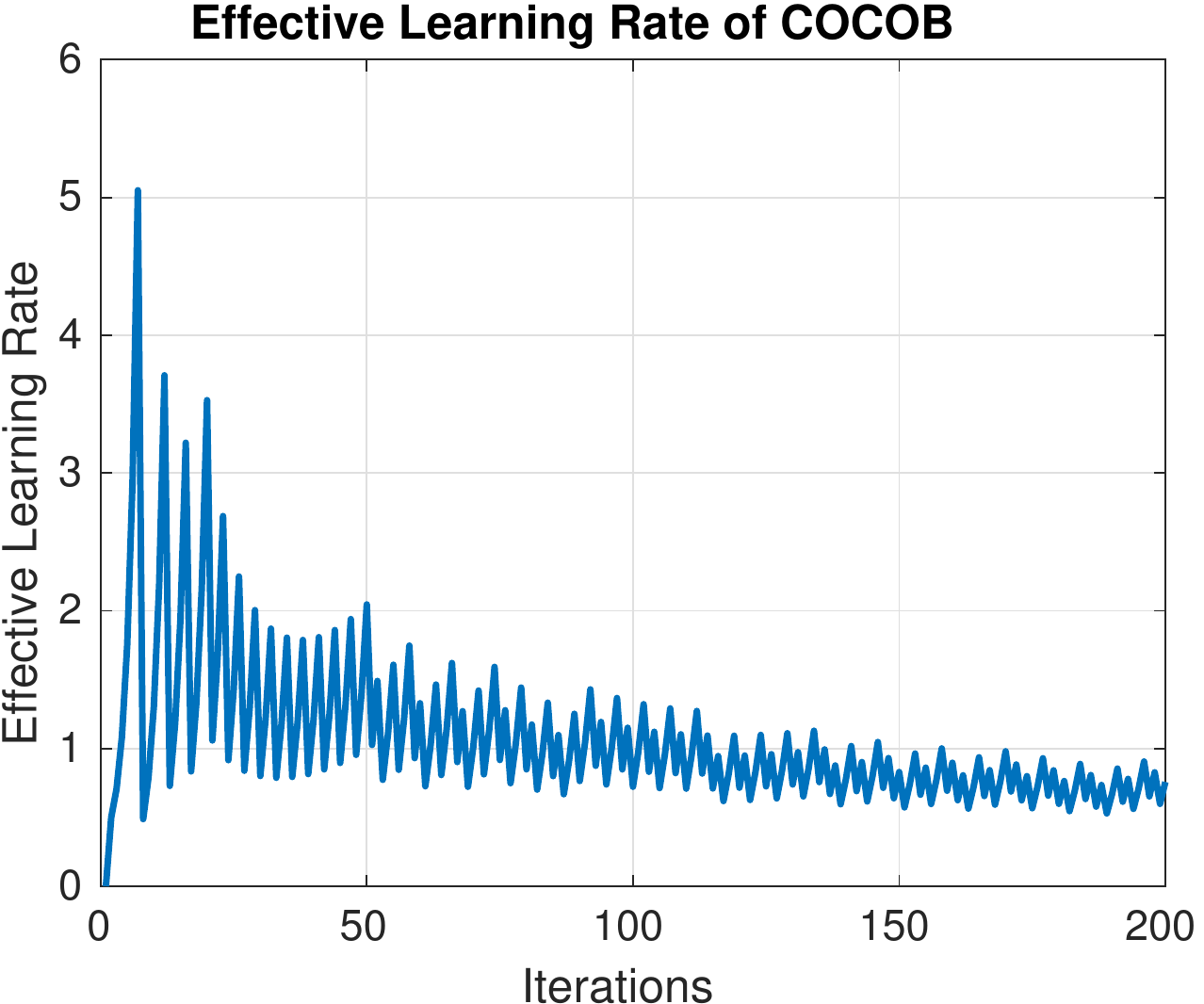}
  \end{tabular} 
  \caption{Behaviour of COCOB (left) and gradient descent with various learning rates and same number of steps (center)
in minimizing the function $y=|x-10|$. (right) The effective learning rates of COCOB. Figures best viewed in colors.}
  \label{fig:comparison}
\end{figure}

To gain a better understanding on the differences between COCOB and other subgradient descent algorithms, it is helpful 
to compare their behaviour on the simple one-dimensional function $F(x)=|x-10|$ already used in Section~\ref{sec:coin}. 
In Figure~\ref{fig:comparison} (left), COCOB starts from 0 and over time it increases in an exponential way 
the iterate $w_t$, until it meets a gradient of opposing sign. From the gambling perspective this is obvious: 
The wealth will increase exponentially because there is a sequence of identical outcomes, that in turn gives 
an increasing wealth and a sequence of increasing bets. 

On the other hand, in Figure~\ref{fig:comparison} (center), gradient descent 
shows a different behaviour depending on its learning rate. If the learning rate is constant and too small 
(black line) it will take a huge number of steps to reach the vicinity of the minimum. If the learning rate 
is constant and too large (red line), it will keep oscillating around the minimum, unless some form of 
averaging is used~\citep{Zhang04}. If the learning rate decreases as $\tfrac{\eta}{\sqrt{t}}$, as in 
AdaGrad~\citep{DuchiHS11}, it will slow down over time, but depending of the choice of the initial learning 
rate $\eta$ it might take an arbitrary large number of steps to reach the minimum.

Also, notice that in this case the time to reach the vicinity of the minimum for gradient descent is not influenced 
in any way by momentum terms or learning rates that adapt to the norm of the past gradients, because 
the gradients are all the same. Same holds for second order methods: The function in figure lacks of 
any curvature, so these methods could not be used. Even approaches based on the reduction of the variance 
in the gradients, e.g. \citep{JohnsonZ13}, do not give any advantage here because the subgradients are 
deterministic.

Figure~\ref{fig:comparison} (right) shows the ``effective learning'' rate of COCOB that is 
$\tilde{\eta}_t := w_t \sqrt{\sum_{i=1}^t g_i^2}$. This is the learning rate we should use in 
AdaGrad to obtain the same behaviour of COCOB. We see a very interesting effect: The learning rate 
is not constant nor is monotonically increasing or decreasing. Rather, it is big when we are far 
from the optimum and small when close to it. However, we would like to stress that this behaviour 
has not been coded into the algorithm, rather it is a side-effect of having the optimal convergence rate.

We will show in Section~\ref{sec:exp} that this theoretical gain is confirmed in the empirical results. 

\section{Backprop and Coin Betting}
\label{sec:backprop}

The algorithm described in the previous section is guaranteed to converge at the optimal convergence 
rate for non-smooth functions and does not require a learning rate. However, it still needs to know 
the maximum range of the gradients on each coordinate.
Note that for the effect of the vanishing gradients, each layer will have a different range of the 
gradients~\citep{Hochreiter91}. Also, the weights of the network can grow over time, increasing the 
value of the gradients too. Hence, it would be impossible to know the range of each gradient beforehand and use any strategy based on betting.

By following the previous literature, e.g.~\citep{KingmaB15}, we propose a variant of COCOB 
better suited to optimizing deep networks. We name it COCOB-Backprop and its pseudocode is in 
Algorithm~\ref{algorithm:COCOB-Backprop}. Although this version lacks the backing of a theoretical 
guarantee, it is still effective in practice as we will show experimentally in Section~\ref{sec:exp}.

\setlength{\textfloatsep}{10pt}
\begin{algorithm}[t]
\caption{COCOB-Backprop}
\label{algorithm:COCOB-Backprop}
\begin{algorithmic}[1]
{
\STATE{Input: $\alpha>0$ (default value = 100); $\bw_1\in \R^d$ (initial parameters); $T$ (maximum number of iterations); $F$ (function to minimize)}
\STATE{Initialize: $L_{0,i}\leftarrow0$, $G_{0,i}\leftarrow0$, $\Reward_{0,i}\leftarrow0$, $\theta_{0,i}\leftarrow0$, $i=1,\cdots, \text{number of parameters}$}
\FOR{$t=1,2,\dots, T$}
\STATE{Get a (negative) stochastic subgradient $\bg_{t}$ such that $\E[\bg_t] \in \partial [-F(\bw_t)]$}
\FOR{each $i$-th parameter in the network}
\STATE{Update the maximum observed scale: $L_{t,i} \leftarrow \max(L_{t-1,i}, |g_{t,i}|)$} \label{line:lipschitz}
\STATE{Update the sum of the absolute values of the subgradients: $G_{t,i} \leftarrow G_{t-1,i} + |g_{t,i}|$}
\STATE{Update the reward: $\Reward_{t,i} \leftarrow \max(\Reward_{t-1,i} + (w_{t,i}-w_{1,i}) g_{t,i},0)$} \label{line:wealth}
\STATE{Update the sum of the gradients: $\theta_{t,i} \leftarrow \theta_{t-1,i} + g_{t,i}$}
\STATE{Calculate the parameters: $w_{t,i} \leftarrow w_{1,i} + \tfrac{\theta_{t,i}}{L_{t,i} \max(G_{t,i}+L_{t,i},\alpha L_{t,i})} \left(L_{t,i} +\Reward_{t,i}\right)$} \label{line:update}
\ENDFOR
\ENDFOR
\STATE{Return $\bw_{T}$}\label{line:output}
}
\end{algorithmic}
\end{algorithm}

There are few differences between COCOB and COCOB-Backprop. First, we want to be adaptive to the maximum component-wise 
range of the gradients. Hence, in line \ref{line:lipschitz} we constantly update the values $L_{t,i}$ for each variable. 
Next, since $L_{i,t-1}$ is not assured anymore to be an upper bound on $g_{t,i}$, we do not have any guarantee that 
the wealth $\Reward_{t,i}$ is non-negative. Thus, we enforce the positivity of the reward in line \ref{line:wealth} of 
Algorithm~\ref{algorithm:COCOB-Backprop}.

We also modify the fraction to bet in line \ref{line:update} by removing the sigmoidal function
because $2\sigma(2x)-1\approx x$ for $x\in[-1,1]$. This choice simplifies the code and always improves 
the results in our experiments. Moreover, we change the denominator of the fraction to bet such that it is 
at least $\alpha L_{t,i}$. This has the effect of restricting the value of the parameters in the first iterations 
of the algorithm. To better understand this change, consider that, for example, in AdaGrad and Adam with learning rate $\eta$ the first update is $w_{2,i}=w_{1,i}-\eta \sgn(g_{1,i})$. Hence, $\eta$ should have a value smaller than $w_{1,i}$ in order to not ``forget'' the initial point too fast. In fact, the initialization is critical to obtain good results and moving too far away from it destroys the generalization ability of deep networks. Here, the first update becomes $w_{2,i}=w_{1,i}-\frac{1}{\alpha}\sgn(g_{1,i})$, so $\frac{1}{\alpha}$ should also be small compared to $w_{1,i}$.

Finally, as in previous algorithms, we do not return the average or a random iterate, but just the last one
(line \ref{line:output} in Algorithm~\ref{algorithm:COCOB-Backprop}).

\section{Empirical Results and Future Work}
\label{sec:exp}

We run experiments on various datasets and architectures, comparing COCOB with some popular 
stochastic gradient learning algorithms: AdaGrad~\citep{DuchiHS11}, RMSProp~\citep{TielemanH12}, 
Adadelta~\citep{Zeiler12}, and Adam~\citep{KingmaB15}.
For all the algorithms, but COCOB, we select their learning rate as the one that gives the best training cost 
a posteriori using a very fine grid of values\footnote{[0.00001, 0.000025, 0.00005, 0.000075, 0.0001, 0.00025, 
0.0005, 0.00075, 0.001, 0.0025, 0.005, 0.0075, 0.01, 0.02, 0.05, 0.075, 0.1]}. 
We implemented\footnote{\url{https://github.com/bremen79/cocob}} COCOB (following Algorithm~\ref{algorithm:COCOB-Backprop}) in Tensorflow~\citep{tensorflow2015-whitepaper} 
and we used the implementations of the other algorithms provided by this deep learning framework.
The best value of the learning rate for each algorithm and experiment is reported in the legend.

We report both the training cost and the test error, but, as in previous work, e.g.,~\citep{KingmaB15}, 
we focus our empirical evaluation on the former. Indeed, given a large enough neural network it is always 
possible to overfit the training set, obtaining a very low performance on the test set. Hence, test errors 
do not only depends on the optimization algorithm.

\begin{figure}[t]
  \centering
  \begin{tabular}{c@{\hskip 1cm}c}
  \includegraphics[width=0.40\textwidth]{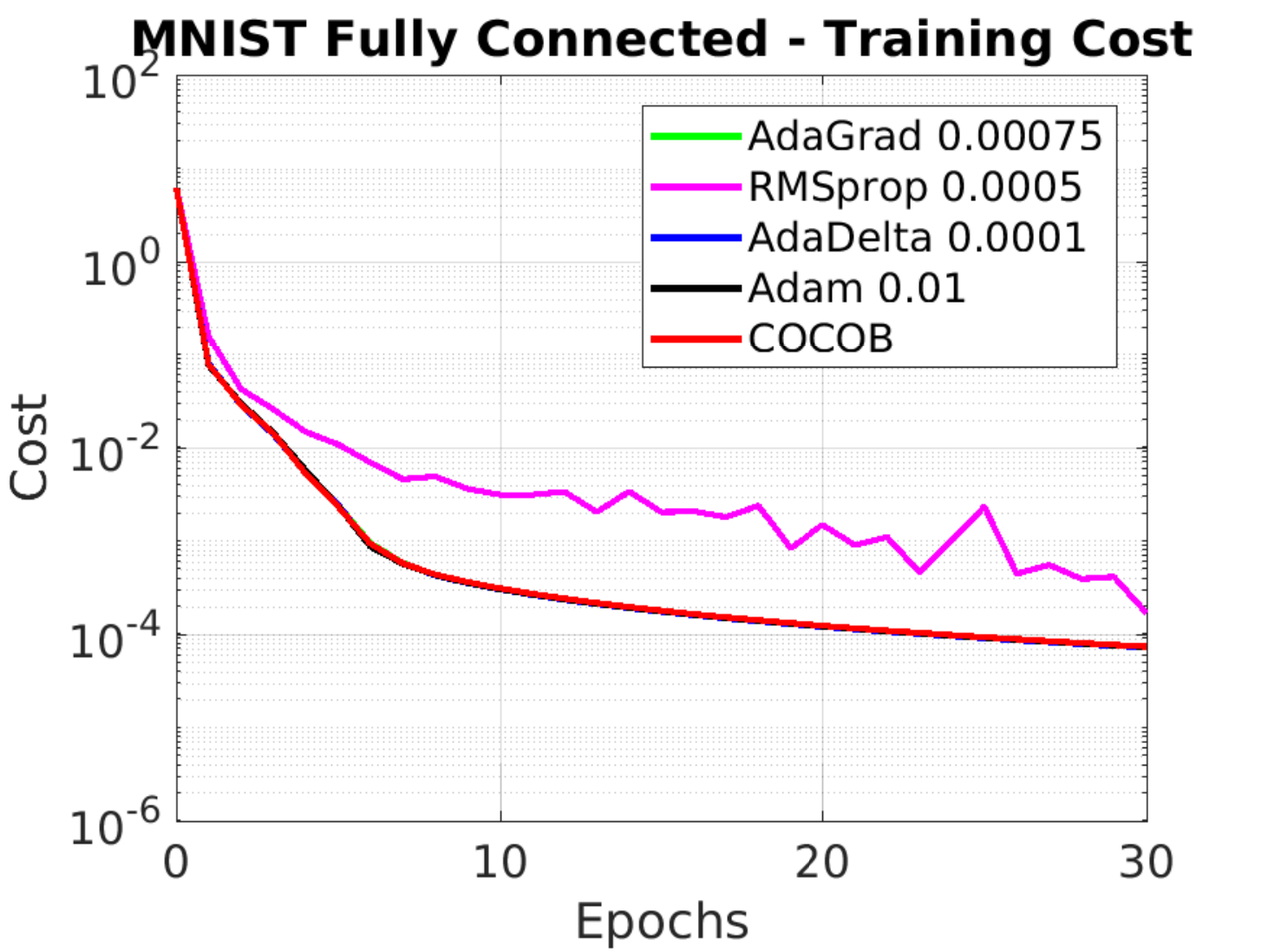} &
  \includegraphics[width=0.40\textwidth]{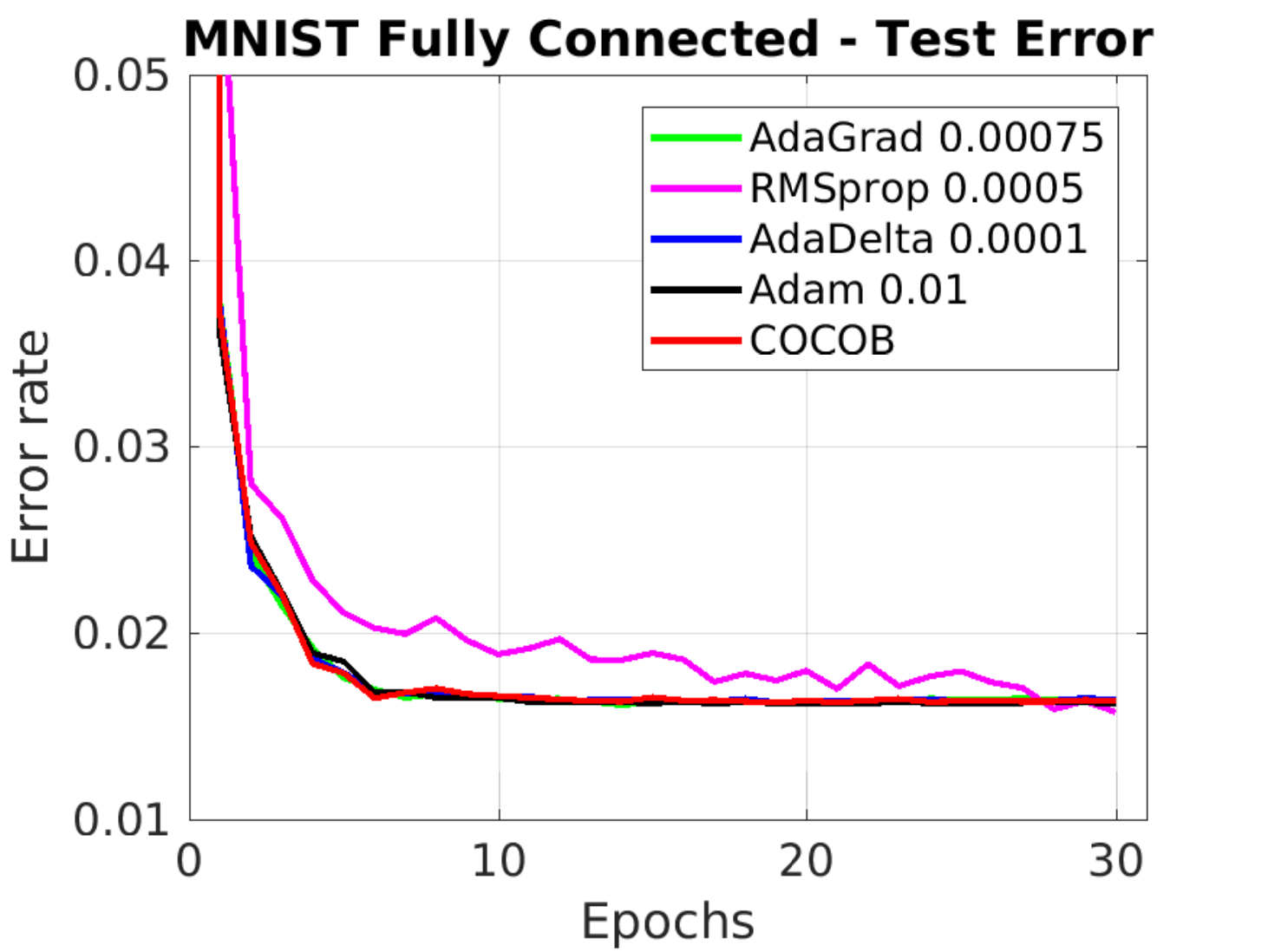} \\
  \includegraphics[width=0.40\textwidth]{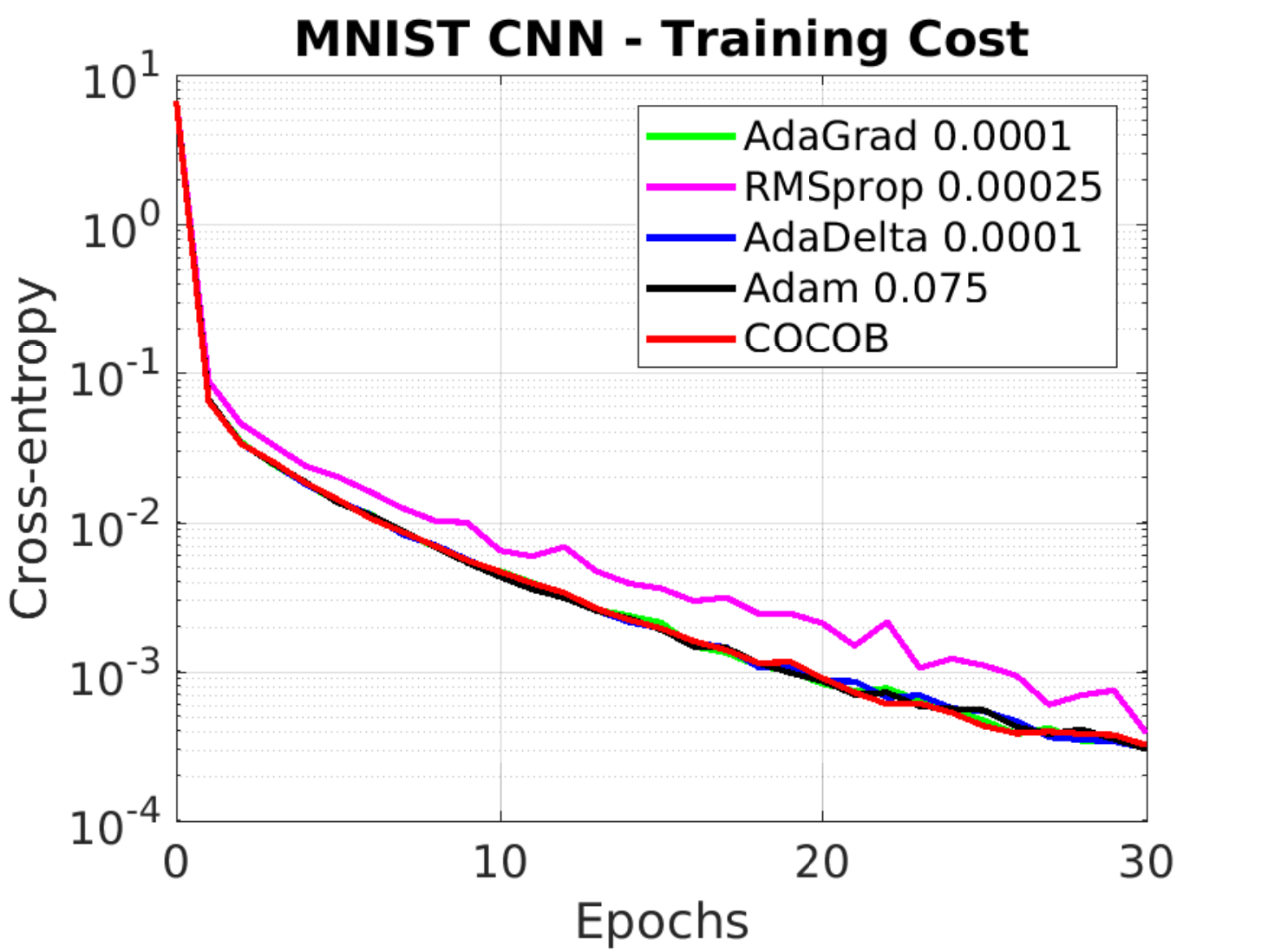} &
  \includegraphics[width=0.40\textwidth]{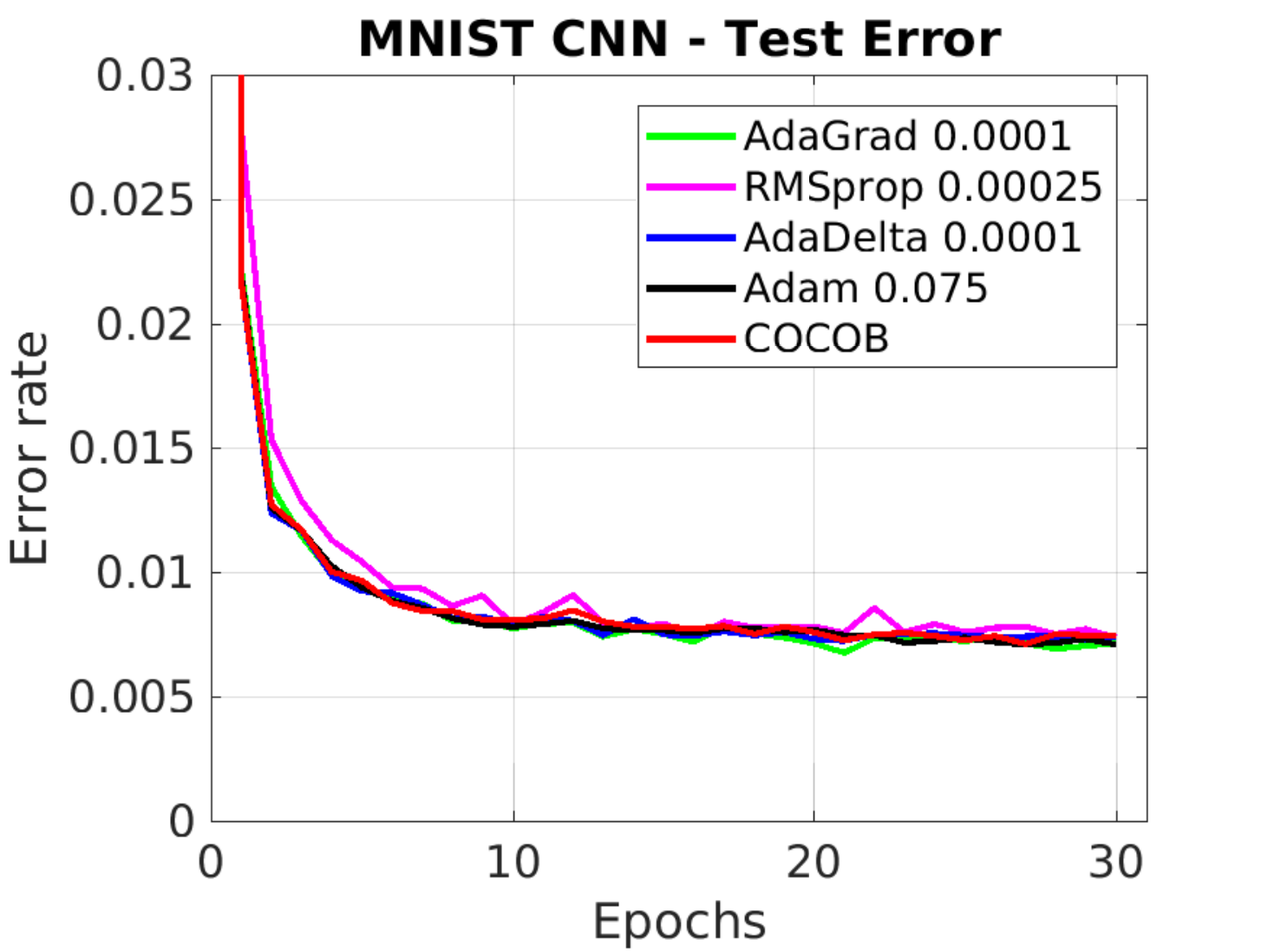}
  \end{tabular} 
  \caption{Training cost (cross-entropy) (left) and testing error rate (0/1 loss) (right) vs. the number 
epochs with two different architectures on MNIST, as indicated in the figure titles. The y-axis is logarithmic 
in the left plots. Figures best viewed in colors.}
  \label{fig:exp1}
\end{figure}

\paragraph{Digits Recognition.}
As a first test, we tackle handwritten digits recognition using the MNIST dataset~\citep{MNIST}.
It contains $28\times 28$ grayscale images with 60k training data, and 10k test samples.   
We consider two different architectures, a fully connected 2-layers network and a 
Convolutional Neural Network (CNN). In both cases we study different optimizers on the standard
cross-entropy objective function to classify 10 digits. 
For the first network we reproduce the structure described in the multi-layer experiment of 
\citep{KingmaB15}: it has two fully connected
hidden layers with 1000 hidden units each and ReLU activations, with mini-batch size of 100.
The weights are initialized with a centered truncated normal distribution and standard deviation 0.1,
the same small value 0.1 is also used as initialization for the bias.
The CNN architecture follows the Tensorflow tutorial~\footnote{\url{https://www.tensorflow.org/get_started/mnist/pros}}:
two alternating stages of $5 \times 5$ convolutional filters and 
$2 \times 2$ max pooling are followed by a fully connected layer of 1024 
rectified linear units (ReLU). To reduce overfitting, 50\% dropout noise is used during training.

Training cost and test error rate as functions of the number of training epochs are reported in 
Figure~\ref{fig:exp1}. With both architectures, the training cost of COCOB 
decreases at the same rate of the best tuned competitor algorithms.  
The training performance of COCOB is also reflected in its associated test error
which appears better or on par with the other algorithms.


\begin{figure}[t]
  \centering
  \begin{tabular}{c@{\hskip 1cm}c}
  \includegraphics[width=0.40\textwidth]{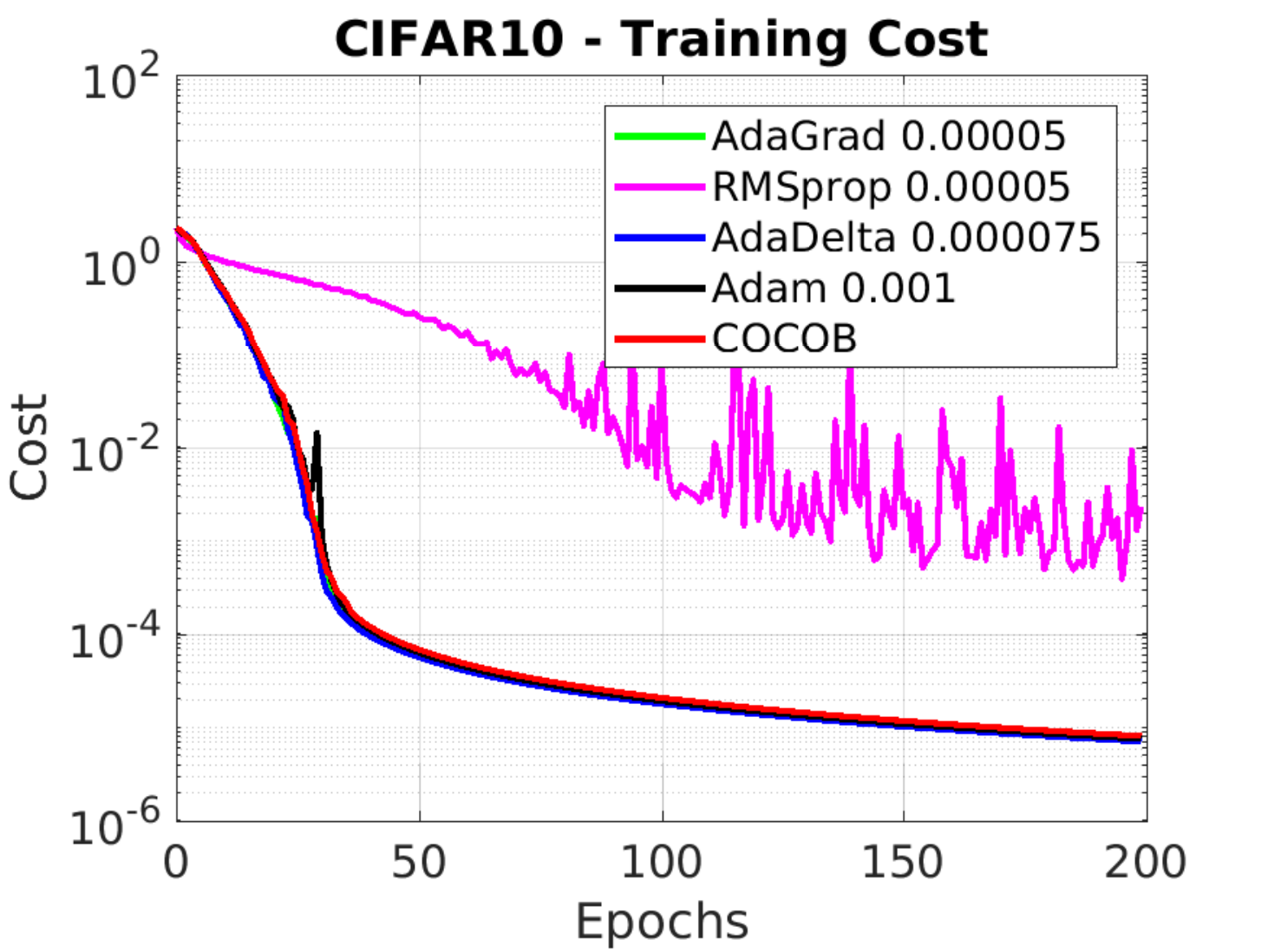} &
  \includegraphics[width=0.40\textwidth]{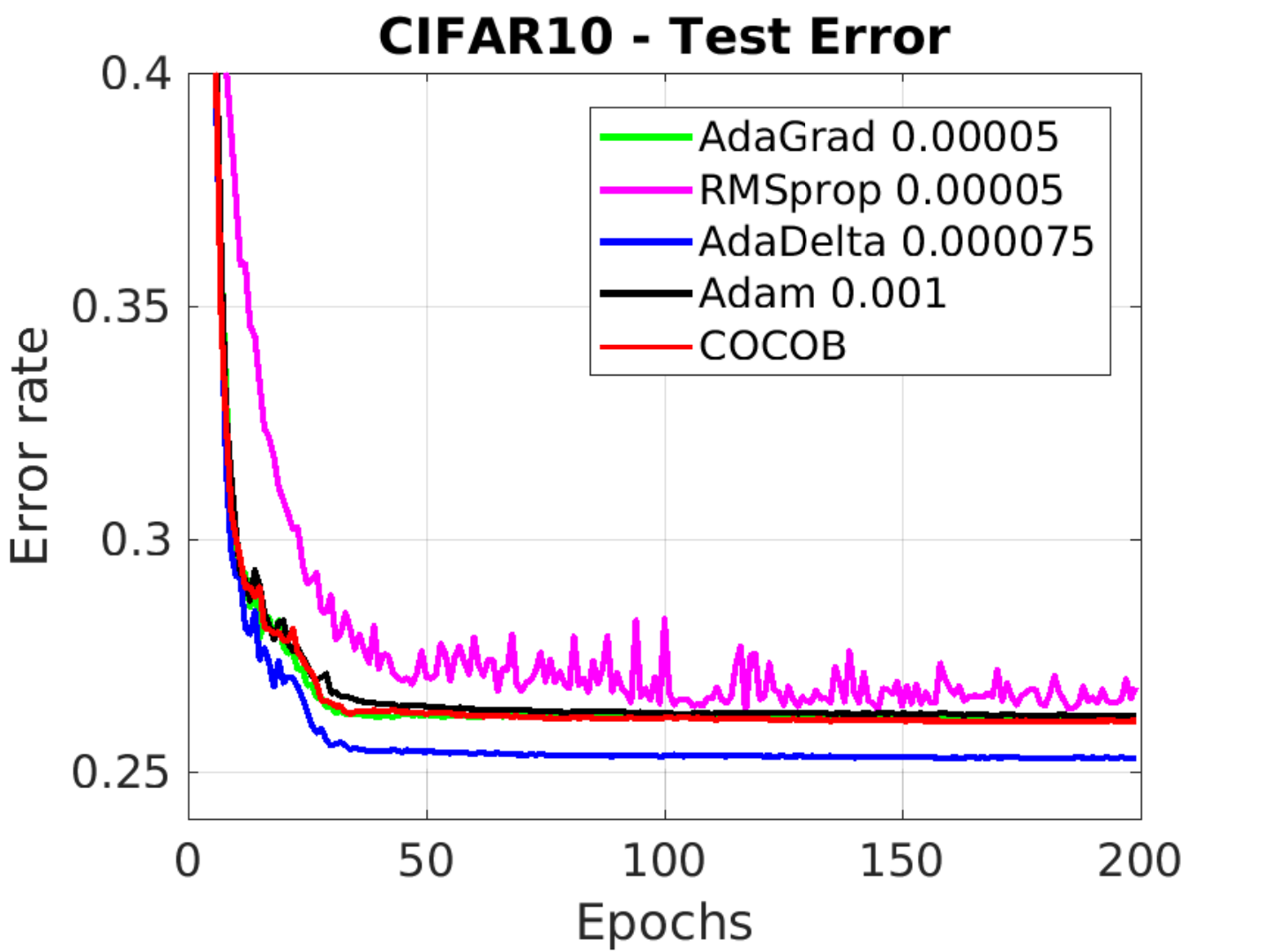} 
  \end{tabular} 
  \caption{Training cost (cross-entropy) (left) and testing error rate (0/1 loss) (right) 
vs. the number epochs on CIFAR-10. The y-axis is logarithmic in the left plots. Figures best viewed in colors.}
  \label{fig:exp2}

\end{figure}
\begin{figure}[t]
  \centering
  \begin{tabular}{c@{\hskip 1cm}c}
  \includegraphics[width=0.4\textwidth]{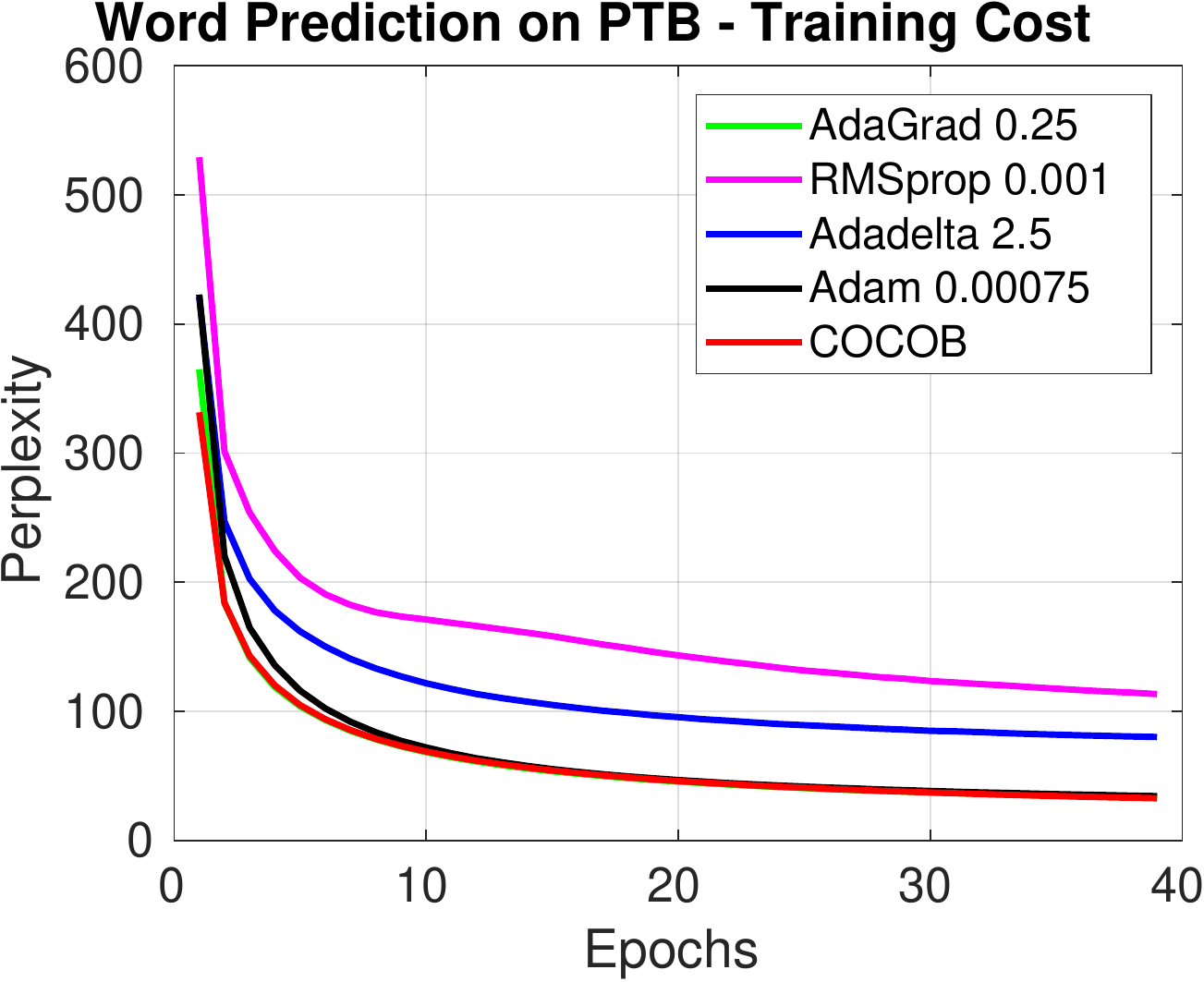} &
  \includegraphics[width=0.4\textwidth]{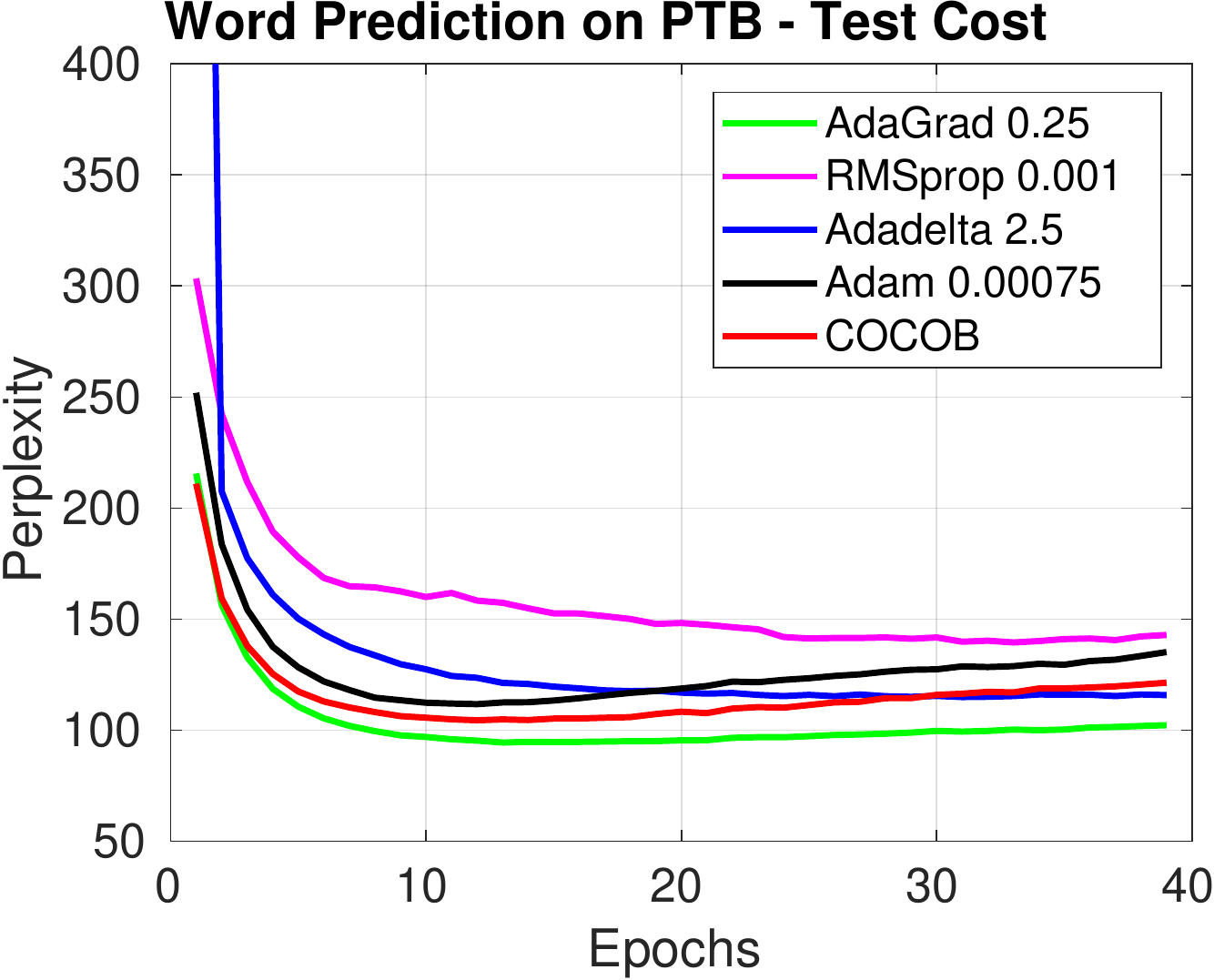} 
  \end{tabular} 
  \caption{Training cost (left) and test cost (right) measured as average per-word perplexity 
vs. the number epochs on PTB word-level language modeling task. Figures best viewed in colors.}
  \label{fig:exp3}
\end{figure}

\paragraph{Object Classification.}
We use the popular CIFAR-10 dataset~\citep{Krizhevsky09} to classify $32\times32$ RGB images across 10 object categories.
The dataset has 60k images in total, split into a training/test set of 50k/10k samples.  
For this task we used the network defined in the Tensorflow CNN tutorial\footnote{\url{https://www.tensorflow.org/tutorials/deep_cnn}}.
It starts with two convolutional layers with 64 kernels of dimension $5 \times 5 \times 3$,
each followed by a  $3 \times 3 \times 3$ max pooling with stride of 2 and by local response normalization 
as in \cite{NIPS2012_4824}. Two more fully connected layers respectively of 384 and 192 rectified linear units complete 
the architecture that ends with a standard softmax cross-entropy classifier. We use a batch size of 128 and the input 
images are simply pre-processed by whitening. Differently from the Tensorflow tutorial, we do not apply
image random distortion for data augmentation.

The obtained results are shown in Figure~\ref{fig:exp2}.
Here, with respect to the training cost, our learning-rate-free COCOB performs on par with the best competitors. For all the algorithms, there is a good correlation between the test performance and the training cost. COCOB and its best competitor AdaDelta show similar
classification results that differ on average $\sim 0.008$ in error rate.

\paragraph{Word-level Prediction with RNN.}
Here we train a Recurrent Neural Network (RNN) on a language modeling task. 
Specifically, we conduct word-level prediction experiments on the Penn Tree Bank (PTB) dataset~\citep{MarcusMS93} 
using the 929k training words and its 73k validation words. We adopted the medium LSTM~\citep{HochreiterS97} network 
architecture described in \citet{ZarembaSV14}: it has 2 layers with 650 units per layer and parameters
initialized uniformly in $[-0.05, 0.05]$, a dropout of 50\% is applied on the non-recurrent connections,
and the norm of the gradients (normalized by mini-batch size = 20) is clipped at 5.

We show the obtained results in terms of average per-word perplexity in Figure~\ref{fig:exp3}.
In this task COCOB performs as well as Adagrad and Adam with respect to the training cost and much better than the other
algorithms. In terms of test performance, COCOB, Adam, and AdaGrad all show an overfit behaviour 
indicated by the perplexity which slowly grows after having reached its minimum.
Adagrad is the least affected by this issue and presents the best results, followed by COCOB
which outperforms all the other methods. We stress again that the test performance does not depend only 
on the optimization algorithm used in training and that early stopping may mitigate the overfitting effect.

\paragraph{Summary of the Empirical Evaluation and Future Work.}
Overall, COCOB has a training performance that is on-par or better than state-of-the-art algorithms with perfectly tuned learning rates.
The test error appears to depends on other factors too, with equal training errors corresponding to different test errors.

We would also like to stress that in these experiments, contrary to some of the previous reported empirical results on similar datasets and networks, the difference between the competitor algorithms is minimal or not existent when they are tuned on a very fine grid of learning rate values. Indeed, the very similar performance of these methods seems to indicate that all the algorithms are inherently doing the same thing, despite their different internal structures and motivations. 
Future more detailed empirical results will focus on unveiling what is the common structure of these algorithms that give rise to this behavior.

In the future, we also plan to extend the theory of COCOB beyond $\tau$-weakly-quasi-convex functions, 
characterizing the non-convexity present in deep networks.
Also, it would be interesting to evaluate a possible integration of the betting framework with second-order methods.

\section*{Acknowledgments}
The authors thank the Stony Brook Research Computing and Cyberinfrastructure, and the Institute for 
Advanced Computational Science at Stony Brook University for access to the high-performance SeaWulf 
computing system, which was made possible by a \$1.4M National Science Foundation grant (\#1531492). 
The authors also thank Akshay Verma for the help with the TensorFlow implementation and Matej Kristan 
for reporting a bug in the pseudocode in the previous version of the paper. T.T. was supported by the 
ERC grant 637076 - RoboExNovo. F.O. is partly supported by a Google Research Award.

\bibliography{learning}
\bibliographystyle{abbrvnat}

\appendix

\section{Proof of Theorem~\ref{theo:main}}
\label{sec:proof_main}

First we state some technical lemmas that will be used in the proof of the convergence rate of COCOB.

\begin{lemma}{\cite[extended version, Lemma~18]{OrabonaP16b}}
\label{lemma:dual}
Define $f(x)= \beta \exp\frac{x^2}{2 \alpha}$, for $\alpha,\beta>0$. Then
\[
f^*(y) \leq |y| \sqrt{\alpha \log \left(\frac{\alpha y^2}{\beta^2} +1 \right)} - \beta~.
\]
\end{lemma}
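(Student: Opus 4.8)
The plan is to bound $f^*$ directly from its definition $f^*(y)=\sup_{x\in\R}\bigl(xy-\beta e^{x^2/(2\alpha)}\bigr)$, by showing that the bracketed expression is dominated by the claimed bound for \emph{every} $x$, so that no explicit maximizer is needed. First I would reduce the setting: $f$ is even, hence so is $f^*$, and the right-hand side is even in $y$, so it suffices to take $y\ge 0$; the case $y=0$ is immediate since $f^*(0)=-\beta$. For $y>0$, replacing $x$ by $|x|$ can only increase $xy-\beta e^{x^2/(2\alpha)}$, so it is enough to consider $x\ge 0$. Adding $\beta$ throughout, the goal becomes the pointwise inequality $xy-\beta\bigl(e^{x^2/(2\alpha)}-1\bigr)\le y\sqrt{\alpha\log(\alpha y^2/\beta^2+1)}$ for all $x\ge 0$.

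Next I would make the change of variables $p:=e^{x^2/(2\alpha)}-1\ge 0$, so that $x=\sqrt{2\alpha\log(1+p)}$ and $e^{x^2/\alpha}-1=p(p+2)$. Writing $A:=2\alpha\log(1+p)$ and $B:=\alpha\log(\alpha y^2/\beta^2+1)$, the target reads $y\bigl(\sqrt A-\sqrt B\bigr)\le\beta p$. If $A\le B$ the left side is nonpositive and we are done; note that $A\le B$ is exactly the condition $\beta^2 p(p+2)\le\alpha y^2$. So the remaining case is $\beta^2 p(p+2)>\alpha y^2$, equivalently $0\le B<A$.

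In that case I would use $\bigl(\sqrt A-\sqrt B\bigr)^2=A+B-2\sqrt{AB}\le A-B$ (since $AB\ge B^2$), which reduces matters to proving $y^2(A-B)\le\beta^2 p^2$. Here $A-B=\alpha\log\frac{\beta^2(1+p)^2}{\alpha y^2+\beta^2}$, so by $\log t\le t-1$ we get $A-B\le\alpha\,\frac{\beta^2 p(p+2)-\alpha y^2}{\alpha y^2+\beta^2}$; substituting this and clearing the positive denominators, the inequality $y^2(A-B)\le\beta^2 p^2$ follows from $\alpha y^2\bigl(\beta^2 p(p+2)-\alpha y^2\bigr)\le\beta^2 p^2(\alpha y^2+\beta^2)$, and the difference of the two sides is the perfect square $(\beta^2 p-\alpha y^2)^2\ge 0$. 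Undoing the substitution yields the pointwise inequality, and taking the supremum over $x$ gives the lemma; this reproduces the argument of the cited reference.

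I expect the only step that is not pure bookkeeping is spotting that the substitution $p=e^{x^2/(2\alpha)}-1$, together with $\sqrt A-\sqrt B\le\sqrt{A-B}$ and $\log t\le t-1$, makes the whole estimate collapse to a single perfect square; once that is seen, everything else is routine verification. A secondary point to handle carefully is the case distinction between $A\le B$ and $A>B$, which is precisely what makes $\sqrt{A-B}$ well defined and keeps the trivial half of the argument honest.
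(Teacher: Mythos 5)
Your proof is correct; I checked each step, including the reduction to $x,y\ge 0$, the equivalence of $A\le B$ with $\beta^2 p(p+2)\le \alpha y^2$, the bound $\sqrt{A}-\sqrt{B}\le\sqrt{A-B}$ for $A\ge B\ge 0$, the identity $A-B=\alpha\log\frac{\beta^2(1+p)^2}{\alpha y^2+\beta^2}$, and the final algebra: the difference $\beta^2p^2(\alpha y^2+\beta^2)-\alpha y^2\bigl(\beta^2p(p+2)-\alpha y^2\bigr)$ does indeed equal $(\beta^2 p-\alpha y^2)^2$. One point of comparison is moot, however: this paper does not prove the lemma at all --- it imports it verbatim, with citation, from Lemma~18 of the extended version of \citet{OrabonaP16b}, so the only proof to compare against is the one in that reference. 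The derivation there proceeds through the stationarity condition $y=f'(x)$ at the maximizer, which yields a transcendental equation handled via the Lambert $W$ function together with an explicit logarithmic upper bound on $W$. Your route is genuinely different and, to my mind, preferable as a self-contained argument: by never locating the maximizer and instead proving the pointwise domination $xy-f(x)+\beta\le |y|\sqrt{\alpha\log(\alpha y^2/\beta^2+1)}$ for every $x$ via the substitution $p=e^{x^2/(2\alpha)}-1$ and the elementary inequalities $\sqrt{A}-\sqrt{B}\le\sqrt{A-B}$ and $\log t\le t-1$, you collapse everything to a single perfect square and avoid the Lambert-function machinery entirely. What the reference's approach buys in exchange is a view of where the supremum actually sits and how tight the bound is near the maximizer; what yours buys is brevity, no special functions, and no need to verify that the stationary point is the global maximum.
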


\begin{lemma}
\label{lemma:recurrence}
Let $a\geq2$. Then, with the notation of Algorithm~\ref{algorithm:COCOB}, for any $t$ we have
\begin{align*}
&(1+\beta_{t,i} g_{t,i}) \exp\left(\frac{\theta_{t-1,i}^2}{a L_i (G_{t-1,i}+L_i)}- \sum_{j=1}^{t-1} \frac{|g_{j,i}|}{a (G_{j-1,i}+L_i)}\right)\\
&\quad \geq \exp\left(\frac{\theta_{t,i}^2}{a L_i (G_{t,i}+L_i)}- \sum_{j=1}^{t} \frac{|g_{j,i}|}{a (G_{j-1,i}+L_i)}\right)~.
\end{align*}
\end{lemma}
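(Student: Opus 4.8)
The plan is to prove the inequality one coordinate at a time, so I fix $i$ and suppress it throughout; write $L=L_i$, $\theta=\theta_{t-1}$, $g=g_t$, $G=G_{t-1}$, so that $\theta_t=\theta+g$, $G_t=G+|g|$, with $|g|\le L$ and $G\ge L$. Since the factor $\exp\!\big(\tfrac{\theta^2}{aL(G+L)}-\sum_{j=1}^{t-1}\tfrac{|g_j|}{a(G_{j-1}+L)}\big)$ is strictly positive, dividing it out reduces the claim to the single-step bound
$$1+\beta_t g\ \ge\ \exp\!\left(\frac{(\theta+g)^2}{aL(G+|g|+L)}-\frac{\theta^2}{aL(G+L)}-\frac{|g|}{a(G+L)}\right)=:\exp(\Delta).$$
Using $2\sigma(2x)-1=\tanh x$ I rewrite the bet as $\beta_t=\tfrac1L\tanh z$ with $z=\tfrac{\theta+g}{G+|g|+L}$. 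Because $|\theta+g|\le|\theta|+|g|\le(G-L)+|g|=G_t-L<G_t+L$ we have $|z|<1$, hence $|\beta_t g|\le|\tanh z|<1$; in particular $1+\beta_t g>0$ and taking logarithms is legitimate.

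Next I take logarithms and apply a second-order lower bound for $\ln(1+x)$ valid on $(-1,1)$, namely $\ln(1+x)\ge x-\tfrac{x^2}{2(1-|x|)}$: for $0\le x<1$ it follows from $\ln(1+x)\ge x-\tfrac{x^2}{2}$, and for $-1<x<0$ the substitution $1+x=u$ turns it into the elementary inequality $2u\ln u\ge u^2-1$ on $(0,1)$. It then suffices to show $\beta_t g-\tfrac{(\beta_t g)^2}{2(1-|\beta_t g|)}\ge\Delta$. Expanding $\Delta$ over the common denominator gives
$$\Delta=\frac{2(G+L)\theta g+|g|^2 G-|g|\theta^2-|g|L(G+L)}{aL(G+L)(G+|g|+L)},$$
and the algebraic identity I would lean on is $z\,(G+|g|+L)=\theta+g$, which converts powers of $z$ back into exactly the rational functions occurring in $\Delta$. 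Combining this with the elementary estimates $|\tanh z|\le|z|$ (to bound the quadratic penalty from above) and $z\tanh z\ge z^2(1-\tfrac{z^2}{3})$ (to bound the favorable first-order term $\beta_t g=\tfrac gL\tanh z$ from below when $\theta g\ge0$), and treating separately the cases $\beta_t g\ge0$ (bet reinforced) and $\beta_t g<0$ (bet opposed), reduces the statement to a rational inequality in $\theta,g,G,L$ with $|g|\le L\le G$ and $a\ge2$.

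I expect that final rational inequality to be the real obstacle: every constant has to be tracked so that the universal value $a=2$ — which enters only through the $\tfrac1{aL(G+L)}$ coefficients — is exactly enough to absorb both the $O(x^2)$ slack in $\ln(1+x)$ and the $O(z^3)$ slack in $\tanh z\le z$, with essentially no room to spare; the role of the auxiliary sum $\sum_j\tfrac{|g_j|}{a(G_{j-1}+L)}$ is precisely to supply the missing logarithmic correction that makes this bookkeeping close. The opposing-sign regime is the delicate one, since there $|\beta_t g|$ can be pushed up towards $\tanh 1\approx0.76$, exactly where the cruder bound $\ln(1+x)\ge x-x^2$ fails; one must exploit that in that same regime $\Delta$ is strongly negative (the potential drops by more than the bet can lose), which is why the sharper lower bound on $\ln(1+x)$ near $-1$ is needed. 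A possibly cleaner alternative is to bypass logarithms and argue directly that $\beta_t$ nearly maximizes $\beta\mapsto\ln(1+\beta g)$ over $|\beta|\le1/L$ — the aggregating-forecaster / KT-mixture viewpoint attached to the Gaussian-type potential $\exp(\theta^2/(aL(G+L)))$ — but in either route the constant-chasing is the heart of the argument.
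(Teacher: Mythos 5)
Your first reduction (divide out the common positive exponential, pass to logarithms, reduce to a one-step inequality for $\Delta$) matches the paper's, but after that point the proposal diverges and, more importantly, stops short of a proof: everything substantive is deferred to a ``final rational inequality'' that you yourself flag as the real obstacle and do not verify. That verification is the entire content of the lemma. The paper closes it with a structural trick you are missing: setting $\phi(x)=-\log(1+\beta_t x)+\frac{(\theta_{t-1}+x)^2}{aL(G_{t-1}+|x|)}$, one observes that $\phi$ is convex separately on $[-L,0]$ and on $[0,L]$, and that the particular bet $\beta_t=\frac{1}{L}\tanh\bigl(\frac{2\theta_{t-1}}{a(G_{t-1}+L)}\bigr)$ equalizes the endpoints, $\phi(L)=\phi(-L)$. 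Convex interpolation then gives $\phi(g_t)\le\phi(0)+\frac{|g_t|}{L}\bigl(\phi(L)-\phi(0)\bigr)$ for all $g_t\in[-L,L]$, so the inequality only needs to be checked at the single extreme point $g_t=L$, where the Taylor bound $\log\bigl(1+\tanh(x/2)\bigr)\ge \frac{x}{2}-\frac{x^2}{8}$ and $a\ge 2$ finish in two lines. Without this reduction you must handle every intermediate $g$, and the estimates you propose ($|\tanh z|\le|z|$, $z\tanh z\ge z^2(1-z^2/3)$, $\ln(1+x)\ge x-\frac{x^2}{2(1-|x|)}$) are exactly the kind that degrade in the opposing-sign regime you identify, where $|\beta_t g|$ approaches $\tanh 1$ and the factor $\frac{1}{1-|x|}$ blows up; you give no argument that the bookkeeping closes there, and there is no evidence it does.

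A secondary but substantive issue: you set $z=\frac{\theta+g}{G+|g|+L}=\frac{\theta_t}{G_t+L}$, making the bet a function of the current outcome $g_t$. The bet multiplying $g_t$ must be the one computed from the \emph{previous} round's statistics, $\beta_t=\frac{1}{L}\tanh\bigl(\frac{\theta_{t-1}}{G_{t-1}+L}\bigr)$ (this is what the wealth induction $\Wealth_t=\Wealth_{t-1}(1+\beta_t g_t)$ in the proof of Theorem~\ref{theo:main} requires, and it is the form the endpoint-equalization $\phi(L)=\phi(-L)$ relies on). The paper's indexing in Algorithm~\ref{algorithm:COCOB} invites this confusion, but with your choice of $z$ the key algebraic properties of the bet are lost, so the setup needs to be corrected before any version of the argument can go through.
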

\begin{proof}
The statement to prove is equivalent to
\[
\ln(1+\beta_t g_t) + \frac{\theta_{t-1}^2}{a L (L+\sum_{j=1}^{t-1} |g_j|)}
\geq \frac{(\theta_t+g_t)^2}{a L (L+\sum_{j=1}^{t} |g_j|)}- \frac{|g_t|}{a (L+\sum_{l=1}^{t-1} |g_l|)},
\]
where for clarity we dropped the index $i$.

Consider the function 
\[
\phi(x)=-\log(1+\beta_t x) + \frac{(\theta_{t-1}+x)^2}{a L (G_{t-1} + |x|)}~.
\]
We have that $\phi(x)$ is piece-wise convex on $[-\infty,0]$ and $[0,\infty]$. Hence, we have that
\begin{align*}
&\phi(x) \leq \phi(0)+\frac{x}{L} (\phi(L)-\phi(0)), \forall 0 \leq x\leq L\\
&\phi(x) \leq \phi(0)+\frac{x}{L} (\phi(0)-\phi(-L)), \forall -L \leq x\leq 0~.
\end{align*}
Also, $\beta_t$ is such that $\phi(L)=\phi(-L)$.
Hence, we have
\[
\phi(x) \leq \phi(0)+ \frac{|x|}{L} (\phi(L)-\phi(0)), \forall -L \leq x\leq L,
\]
that is
\begin{align*}
&\frac{\theta_{t-1}^2}{a L G_{t-1}}-\frac{(\theta_{t-1}+g_t)^2}{a L (G_{t-1} +  |g_t|)} + \log(1+\beta_t g_t) \\
& \quad = \phi(0) - \phi(g_t) \\
& \quad \geq \frac{|g_t|}{L} \left(\phi(0) - \phi(L)\right) \\
& \quad = \frac{|g_t|}{L} \left(\frac{\theta_{t-1}^2}{a L G_{t-1}} - \frac{(\theta_{t-1}+L)^2}{a L (G_{t-1} + L)} + \log(1+\beta_t L)\right), \forall -L \leq g_t \leq L~.
\end{align*}

Using this relation we have that
\begin{align*}
&\frac{\theta_{t-1}^2}{a L G_{t-1}}-\frac{(\theta_{t-1}+g_t)^2}{a L (G_{t-1}+|g_t|)} +\log(1+\beta_t g_t)\\
&\qquad \geq \frac{|g_t|}{L} \left(\frac{\theta_{t-1}^2}{a L G_{t-1}} - \frac{(\theta_{t-1}+L)^2}{a L_i (G_{t-1} + L)} + \log(1+L \beta_t)\right)\\
&\qquad = \frac{|g_t|}{L} \left(\frac{(G_{t-1} + L) \theta_{t-1}^2 - (\theta_{t-1}^2+2 L \theta_{t-1} )G_{t-1} }{a L G_{t-1}(G_{t-1} + L)} + \log(1+L \beta_t)\right) - \frac{|g_t|}{a (G_{t-1} + L)}\\
&\qquad = \frac{|g_t|}{L} \left(\frac{\theta_{t-1}^2}{a G_{t-1}(G_{t-1} + L)}-\frac{2 \theta_{t-1}}{a (G_{t-1} + L)} + \log(1+L \beta_t)\right) - \frac{|g_t|}{a (G_{t-1} + L)}~.
\end{align*}
We now use the Taylor expansion, to obtain
\[
\log\left(1+\frac{\exp(x)-1}{\exp(x)+1}\right) \geq \frac{x}{2} -\frac{x^2}{8} \qquad \forall x \in \R
\]
and, using the expression of $\beta_t$, have
\[
\log\left(1+L \beta_t\right) 
= \log\left(1+\frac{\exp\left(\frac{4 \theta_{t-1}}{a (G_{t-1} + L)}\right)-1}{\exp\left(\frac{4 \theta_{t-1}}{a (G_{t-1} + L)}\right)+1}\right) 
\geq \frac{2 \theta_{t-1}}{a (G_{t-1} + L)} -\frac{2 \theta_{t-1}^2}{a^2 (G_{t-1} + L)^2}~.
\]
Hence the expression 
\begin{align*}
\frac{\theta_{t-1}^2}{a G_{t-1}(G_{t-1} + L)}-\frac{2 \theta_{t-1}}{a (G_{t-1} + L)} + \log(1 + L \beta_t)
&\geq \frac{\theta_{t-1}^2}{a G_{t-1}(G_{t-1} + L_i)}-\frac{2 \theta_{t-1}^2}{a^2 (G_{t-1} + L)^2}\\
& \geq \frac{a L_i \theta_{t-1}^2 + a G_{t-1} \theta_{t-1}^2  - 2 G_{t-1} \theta_{t-1}^2}{a^2 G_{t-1}(G_{t-1} + L)^2}
\end{align*}
is greater than zero if $a \geq 2$, that is true by definition of $a$.
\end{proof}

We can now prove Theorem~\ref{theo:main}.

\begin{proof}[Proof of Theorem~\ref{theo:main}]
First, assume that $\bw_1=\boldsymbol{0}$, then we will show how to remove this assumption.

Define $H_{t,i}(x)=L_i \exp\left(\frac{x^2}{2 L_i (G_{t,i}+L_i)}- \sum_{j=1}^{t} \frac{|g_j|}{2 (L_i+G_{j-1,i})}\right)$.
By induction, we first prove that $\Wealth_{t,i}\geq H_{t,i}(\theta_{t,i})$. For $t=0$, it is obvious because $\Wealth_{0,i}=L_i$. We now assume that $\Wealth_{t-1}\geq H_{t-1,i}(\theta_{t-1,i})$. Note that $|\beta_{t,i} g_{t,i}|<1$.
Hence, using Lemma~\ref{lemma:recurrence}, we have
\begin{equation}
\label{eq:bound_wealth}
\begin{split}
\Wealth_{t,i}
&=\Wealth_{t-1,i} + g_{t,i} w_{t,i} 
= \Wealth_{t-1,i}(1 + g_{t,i} \beta_{t,i}) \\
&\geq (1 + g_{t,1} \beta_{t,i}) H_{t-1,i}(\theta_{t-1,i}) \\
&\geq H_{t,i}(\theta_{t,i}),
\end{split}
\end{equation}
that proves the induction.

Now, in the convex case, using the fact that the stochastic subgradient are unbiased, the definition of the subgradients, and Jensen's inequality, we have
\begin{equation*}
T \left(\E[F(\bar{\bw}_T)] - F(\bw^*)\right) 
\leq \sum_{t=1}^T (\E[F(\bw_t)] - F(\bw^*)) 
\leq \sum_{t=1}^T \E[(\bw^* - \bw_{t})^\top \bg_{t}]~.
\end{equation*}

While, in the the $\tau$-weakly-quasi-convex case, we have
\[
T \left(\E[F(\bw_I)] - F(\bw^*)\right) 
= \sum_{t=1}^T \left( \E[F(\bw_t)] -F(\bw^*)\right)
\leq \tau \sum_{t=1}^T \E[(\bw^* - \bw_{t})^\top \bg_{t}]
\]
by fact that the gradient are unbiased, the definition of $\bw_I$, and the definition of $\tau$-weakly-quasi-convexity.
Hence, the two cases are the same up to the factor $\tau$. We can then proceed in both cases with
\begin{equation}
\sum_{t=1}^T \E[(\bw^* - \bw_{t})^\top \bg_{t}] 
= \sum_{i=1}^d \sum_{t=1}^T \E[w^*_i g_{t,i} - g_{t,i} w_{t,i} ]
= \sum_{i=1}^d \E[L_i + w^*_i \theta_{T,i} -  \Wealth_{T,i}]~. \label{eq:conv_wealth}
\end{equation}

Using the definition of Fenchel's conjugate, \eqref{eq:conv_wealth} and the lower bound on the wealth in \eqref{eq:bound_wealth}, we have
\begin{align*}
\sum_{t=1}^T \E[(\bw^* - \bw_{t})^\top \bg_{t}] 
&= \E\left[\sum_{i=1}^d (L_i + w^*_i \theta_{T,i} -  \Wealth_{T,i})\right] \\
&\leq \E\left[\sum_{i=1}^d (L_i + w^*_i \theta_{T,i} -  H_{T,i}(\theta_{T,i}))\right]\\
&\leq \E\left[\sum_{i=1}^d L_i+\max_{x} \ (w^*_i x -  H_{T,i}(x))\right]
= \E\left[\sum_{i=1}^d L_i+H^*_{T,i}(w^*_i)\right]~.
\end{align*}

Also, the concavity of the logarithm implies that $\frac{a-b}{a}\leq \ln a - \ln b$ for all $a \geq b >0$. Hence
\begin{equation}
\label{eq:bound_log}
\sum_{j=1}^T \frac{|g_j|}{L_i+G_{j-1,i}} 
\leq \sum_{j=1}^T \frac{|g_j|}{G_{j,i}} 
\leq \sum_{j=1}^T (\ln G_{j,i} - \ln G_{j-1,i}) 
= \ln \frac{G_{T,i}}{G_{0,i}}
= \ln \frac{G_{T,i}}{L_i}~.
\end{equation}

Using Lemma~\ref{lemma:dual}, the inequality in \eqref{eq:bound_log}, and overapproximating, we have
\[
H^*_{T,i}(w^*_i) \leq |w^*_i| \sqrt{ L_i (G_{T,i} +L_i)\ln\left(1+ \tfrac{(G_{T,i}+L_i)^2 (w_i^*)^2}{L_i^2}\right)}~.
\]

Putting all together, using Jensen's inequality to bring the expectation under the square root, and dividing by $T$ give us the stated bound, with $\bw_1=\boldsymbol{0}$.

Now, running the algorithm on the function $\tilde{F}(\bw)=F(\bw_t+\bw_1)$, for an arbitrary $\bw_1$, would result in the update in Algorithm~\ref{algorithm:COCOB} and would guarantee the same upper bound on $\E[\tilde{F}\left(\bar{\bw}_T\right)] - \tilde{F}(\bw^*) $ that implies the stated bound.
\end{proof}

\end{document}